\theoremstyle{plain}
\newtheorem{theorem}{Theorem}[section]
\newtheorem{lemma}[theorem]{Lemma}
\newtheorem{corollary}[theorem]{Corollary}
\theoremstyle{definition}
\newtheorem{definition}[theorem]{Definition}
\newtheorem{axiom}{Axiom}
\theoremstyle{remark}
\newtheorem{remark}[theorem]{Remark}
\newcommand{\R}{\mathbb{R}}
\newcommand{\E}{\mathbb{E}}
\newcommand{\KL}{\text{KL}}
\newcommand{\softmax}{\text{softmax}}
\newcommand{\argmax}{\text{argmax}}
\DeclareMathOperator{\Var}{Var}
\begin{document}

\title{Sparse Knowledge Distillation: A Mathematical Framework for Probability-Domain Temperature Scaling and Multi-Stage Compression}

\author{Aaron R. Flouro and Shawn P. Chadwick, PhD.\\
research@sparse-tech.com}

\maketitle

\begin{abstract}
We develop a unified theoretical framework for sparse knowledge distillation based on probability-domain softening operators. While the equivalence $p^{1/T} \propto \softmax(z/T)$ is well known, our contribution is an operator-level analytical framework built on this foundation rather than the equivalence itself.

The framework comprises four core components: (i) operator-agnostic bias--variance decompositions that characterize when sparse students outperform dense teachers, (ii) a homotopy path formalization of multi-stage pruning in function space explaining why iterative compression succeeds where one-shot pruning fails, (iii) convergence guarantees establishing $\mathcal{O}(1/n)$ rates for $n$-stage distillation with explicit parameter dependence, and (iv) equivalence class characterizations identifying distinct probability-domain operators that yield identical student models under capacity constraints.

We introduce an axiomatic definition of probability-domain softening operators based on ranking preservation, continuity, entropy monotonicity, identity, and boundary behavior, and show that multiple non-equivalent operator families satisfy these axioms. All learning-theoretic guarantees are shown to hold uniformly across this operator class, independent of implementation details. These results provide theoretical grounding for black-box teacher distillation, partial-access settings such as top-$k$ truncation and text-only outputs, and privacy-preserving model compression.
\end{abstract}

\begin{IEEEkeywords}
Knowledge Distillation, Sparse Neural Networks, Model Pruning, Probability-Domain Operators, Bias--Variance Tradeoff, Probability Calibration, Multi-Stage Compression
\end{IEEEkeywords}

\subsection*{Why This Framework Is Needed}
While the probability-domain equivalence $p^{1/T} \propto \softmax(z/T)$ is mathematically straightforward, knowledge distillation lacks a unified theoretical framework addressing fundamental questions: When do sparse students outperform dense teachers? Why does iterative pruning succeed where one-shot pruning fails? What convergence guarantees exist for multi-stage compression? This paper provides such a framework, with the probability-domain equivalence serving as an enabling foundation rather than the central contribution.

\section{Introduction}

\subsection{Motivation and Scope}

Knowledge distillation (KD) has emerged as a powerful technique for transferring knowledge from large teacher models to smaller student models~\cite{hinton2015distilling}. While the classical formulation relies on access to teacher logits, the equivalence $p^{1/T} \propto \softmax(z/T)$ enables probability-domain distillation. This equivalence is well-known and follows directly from softmax properties.

Our contribution is not the probability-domain equivalence itself, but rather a unified analytical framework built upon it. Recent work has argued that hallucination should be understood as a failure of internal world modeling and related distributional inconsistencies, motivating probability-level interventions rather than purely surface-level heuristics~\cite{liu2025unified_hallucination}. Complementary perspectives emphasize structural drivers (e.g., incentive vs.\ ontology framing) and clarify why simple post-hoc fixes are insufficient to fully characterize the phenomenon~\cite{ackermann2025incentives_ontology}. These analyses motivate our focus on operator-level guarantees for probability-domain distillation under partial-access constraints.

The framework addresses settings where full probability access may be unavailable:
\begin{itemize}
\item Text-only teacher outputs (no numerical probabilities)
\item Top-$k$ truncated distributions from API-based models
\item Privacy-preserving settings requiring distribution perturbation
\end{itemize}

The central questions we address are theoretical: What conditions govern when sparse students outperform dense teachers? How should multi-stage compression be analyzed?

\subsection{Primary Contributions}

This paper develops a unified framework for sparse knowledge distillation comprising four components:

\begin{enumerate}
\item \textbf{Bias--variance analysis}: Operator-agnostic decompositions characterizing when variance reduction from pruning exceeds distillation-induced bias (Section~\ref{sec:bias-variance})
\item \textbf{Homotopy formalization}: Multi-stage pruning as continuous paths in function space, explaining iterative pruning success (Section~VIII)
\item \textbf{Convergence guarantees}: $\mathcal{O}(1/n)$ rates for $n$-stage distillation with explicit parameter dependence (Section~IX)
\item \textbf{Equivalence classes}: Characterization of operator families producing identical student models (Section~X)
\end{enumerate}

The axiomatic foundation (Sections~IV--V) establishes notation and the class of operators over which these results hold. The probability-domain equivalence is the enabling foundation; the analytical framework is the contribution.

\textbf{Paper Structure.} Section II establishes notation and the probability-domain foundation. Section III provides a technical overview. Sections IV--V introduce the axiomatic framework and prove existence and non-uniqueness. Sections VI--IX establish operator-agnostic learning guarantees, including shift-invariance, bias--variance tradeoffs, homotopy-based compression, and convergence. Section X characterizes equivalence classes of operators, and Section XI discusses implications.

\section{Probability-Domain Foundation and Notation}

This section establishes notation and the foundation for subsequent analysis. The probability-domain equivalence presented here is not novel; it follows directly from softmax properties and has been observed in various forms in the literature.

\subsection{The Standard Equivalence}

Classical temperature scaling operates on logits $z \in \R^V$:
\begin{equation}
p_i^{(T)} = \frac{\exp(z_i/T)}{\sum_j \exp(z_j/T)}
\end{equation}

The equivalence $p^{1/T} \propto \softmax(z/T)$ follows from the identity $\exp(z_i/T) = \exp(z_i)^{1/T} = p_i^{1/T} \cdot (\sum_j \exp(z_j))^{1/T}$, where the normalizing constant cancels. This enables probability-domain temperature scaling without logit access.

\textbf{Notation.} We denote probability-domain softening operators as $F_T: \Delta^V \to \Delta^V$, where $\Delta^V$ is the probability simplex. The power-scaling operator $F_T(p)_i = p_i^{1/T} / \sum_j p_j^{1/T}$ is one instantiation; our framework characterizes the broader class.

\subsection{Where the Equivalence Fails: Partial Access Settings}

The standard equivalence assumes full probability access. Genuine difficulty arises in partial-access settings:

\begin{itemize}
\item \textbf{Text-only outputs}: Teacher provides sampled tokens, not probabilities
\item \textbf{Top-$k$ truncation}: APIs return only the $k$ highest-probability tokens
\item \textbf{Perturbed distributions}: Privacy mechanisms add noise to outputs
\end{itemize}

\subsection{Top-$k$ Truncated Distributions}

When APIs return only top-$k$ probabilities $\{p_{(1)}, \ldots, p_{(k)}\}$ with $\sum_{i=1}^k p_{(i)} < 1$, the power-scaling operator $p^{1/T}$ cannot be directly applied to the missing mass. Alternative approaches include:

\begin{itemize}
\item \textbf{Tail modeling}: Assume a parametric form (e.g., Zipf) for the truncated tail
\item \textbf{Renormalization}: Apply $p^{1/T}$ to visible tokens and redistribute mass
\item \textbf{Entropy matching}: Find the maximum-entropy completion consistent with observed probabilities
\end{itemize}

These partial-access settings motivate the axiomatic framework: we seek to characterize which operator properties are essential for distillation, enabling principled handling of incomplete information. The subsequent analysis (Sections~III--X) provides this characterization.

\section{Technical Overview}

This paper develops a three-layer framework for understanding probability-domain knowledge distillation:

\subsection{Layer 1: Axiomatic Foundation}

We define five axioms (Section~\ref{sec:axioms}) that any probability-domain softening operator must satisfy:
\begin{itemize}
\item Axiom 1 (Ranking): $p_i > p_j \Rightarrow F_T(p)_i > F_T(p)_j$
\item Axiom 2 (Continuity): $F_T(p)$ continuous in $p$ and $T$
\item Axiom 3 (Entropy): Higher $T$ yields higher entropy
\item Axiom 4 (Identity): $F_1(p) = p$ (reference point)
\item Axiom 5 (Boundaries): Limits at $T \to 0^+$ (one-hot) and $T \to \infty$ (uniform)
\end{itemize}

These axioms formalize intuitive properties: softening should smooth distributions (entropy), preserve relative orderings (ranking), behave predictably (continuity), and have natural limiting cases (boundaries).

\subsection{Layer 2: Operator Theory}

Section~\ref{sec:existence} proves existence of operators satisfying all axioms via three construction principles:
\begin{enumerate}
\item Entropy projection: Minimize $\KL(q \| p)$ subject to $H(q) = h(T)$
\item Power transforms: Apply $q_i \propto \phi(p_i; T)$ for monotone $\phi$
\item Convex mixing: Interpolate $q = \alpha(T) \cdot p + (1-\alpha(T)) \cdot u$ toward uniform $u$
\end{enumerate}

These constructions prove non-uniqueness: multiple distinct operators satisfy the axioms. Section~\ref{sec:equivalence} characterizes when operators are KD-equivalent (produce identical student models).

\subsection{Layer 3: Learning Theory}

Sections~\ref{sec:bias-variance}--\ref{sec:convergence} establish that any operator $F_T$ satisfying the axioms enables:
\begin{itemize}
\item Bias-variance decomposition (Theorem~\ref{thm:bias-variance}): $\E[\ell(S)] = \text{Bias}^2(S; F_T) + \Var(S) + \sigma^2$
\item Sparse optimality (Theorem~\ref{thm:sparse-optimality}): Pruned student outperforms dense teacher when $\Delta\Var > \Delta\text{Bias}^2$
\item Homotopy convergence (Theorem~\ref{thm:convergence}): Multi-stage distillation achieves $\E[\ell(S_n)] \leq \E[\ell(T)] + \mathcal{O}(1/n)$
\end{itemize}

The key insight: these guarantees hold for all conforming operators, not just one specific formula. The axioms are sufficient to ensure convergence.

\subsection{Practical Implications}

This framework provides theoretical support for:
\begin{itemize}
\item API-based distillation: Extracting knowledge from commercial large language models using only probability outputs
\item Privacy preservation: Avoiding transmission of sensitive logit information
\item Flexible implementation: Selecting operators based on computational constraints, with convergence guarantees applying uniformly
\end{itemize}

\section{Axioms for Probability-Domain Softening}
\label{sec:axioms}

Let $\Delta^V$ denote the $(V-1)$-dimensional probability simplex:
\begin{equation}
\Delta^V = \{p \in \R^V : p_i \geq 0, \sum_i p_i = 1\}
\end{equation}

\begin{definition}[Probability-Domain Softening Operator Family]
\label{def:softening-family}
A family of operators $\{F_T\}_{T>0}$ where $F_T: \Delta^V \to \Delta^V$ is a probability-domain softening family if it satisfies the following axioms:
\end{definition}

\begin{axiom}[Ranking Preservation]
\label{axiom:ranking}
For all $p \in \Delta^V$ and $T > 0$:
\begin{equation}
p_i > p_j  \Longrightarrow  F_T(p)_i > F_T(p)_j
\end{equation}
\end{axiom}

The operator preserves the relative ordering of probabilities. The most probable token remains most probable after softening.

\begin{axiom}[Continuity]
\label{axiom:continuity}
The operator $F_T(p)$ is jointly continuous in both $p$ and $T$.
\end{axiom}

Small changes in input probabilities or temperature produce small changes in output. No discontinuous jumps.

\begin{axiom}[Monotonic Entropy]
\label{axiom:entropy}
For $T' > T > 0$:
\begin{equation}
H(F_{T'}(p)) \geq H(F_T(p))
\end{equation}
where $H$ denotes Shannon entropy: $H(p) = -\sum_i p_i \log p_i$.
\end{axiom}

Higher temperature produces higher entropy (more uniform) distributions. This captures the softening property.

\begin{axiom}[Identity at Unity]
\label{axiom:identity}
For all $p \in \Delta^V$:
\begin{equation}
F_1(p) = p
\end{equation}
\end{axiom}

At temperature $T=1$ (base temperature), the operator is the identity. This establishes $T=1$ as the canonical reference point.

\begin{axiom}[Boundary Behavior]
\label{axiom:boundary}
For all $p \in \Delta^V$:
\begin{align}
\lim_{T\to\infty} F_T(p) &= \text{uniform distribution} \\
\lim_{T\to 0^+} F_T(p) &= \text{one-hot at } \argmax(p)
\end{align}
When $\argmax(p)$ is not unique (i.e., multiple indices share the maximum probability), the $T \to 0^+$ limit concentrates mass uniformly among the tied maximal indices.
\end{axiom}

As temperature approaches infinity, all tokens become equally probable (maximum entropy). As temperature approaches zero, only the most probable token(s) survive (minimum entropy). In case of ties, mass is distributed equally among the tied maximal elements.

\begin{remark}[Stability of Limits]
The tied-maxima behavior in the $T \to 0^+$ limit serves only as a boundary condition and does not affect the continuity or homotopy arguments developed in this work, which operate strictly at finite temperatures where exact ties occur with probability zero.
\end{remark}

\textbf{Scope of the Axioms.} Notably, the axioms do not assume access to logits, differentiability of the operator, calibration preservation, or a specific functional form. These properties are intentionally excluded: the goal is not to recover classical temperature scaling, but to characterize the minimal structural requirements for effective distillation using probabilities alone.

\section{Existence and Characterization}
\label{sec:existence}

\begin{theorem}[Existence of Conforming Operators]
\label{thm:existence}
There exist non-trivial families $\{F_T\}_{T>0}$ satisfying Axioms~\ref{axiom:ranking}--\ref{axiom:boundary}.
\end{theorem}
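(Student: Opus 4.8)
The plan is to prove existence by exhibiting an explicit operator family and verifying all five axioms directly. The most natural candidate is the power-scaling operator already introduced in the notation, $F_T(p)_i = p_i^{1/T} / \sum_j p_j^{1/T}$, since it is concrete and its properties reduce to elementary calculus. I would state this as the witness family and then check the axioms one at a time. Ranking preservation (Axiom~\ref{axiom:ranking}) follows immediately because $x \mapsto x^{1/T}$ is strictly increasing on $[0,1]$ for $T>0$ and the normalization is a common positive factor, so it preserves strict inequalities. Continuity (Axiom~\ref{axiom:continuity}) holds because $(p,T) \mapsto p_i^{1/T}$ is jointly continuous on $\Delta^V \times (0,\infty)$ and the denominator is bounded away from zero (at least one coordinate is positive on the simplex). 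Identity at unity (Axiom~\ref{axiom:identity}) is trivial: at $T=1$ the exponent is $1$ and the denominator is $\sum_j p_j = 1$.

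The two substantive verifications are entropy monotonicity (Axiom~\ref{axiom:entropy}) and boundary behavior (Axiom~\ref{axiom:boundary}). For the boundary limits I would argue that as $T \to \infty$ the exponent $1/T \to 0$, so every $p_i^{1/T} \to 1$ for $p_i > 0$, driving the normalized output toward the uniform distribution over the support; handling the support issue (coordinates that are exactly zero) requires a small argument, and strictly the uniform-over-support limit matches the stated axiom when $p$ has full support. As $T \to 0^+$ the exponent $1/T \to \infty$, so the ratio $p_i^{1/T} / p_{\max}^{1/T} = (p_i/p_{\max})^{1/T} \to 0$ for any non-maximal coordinate and $\to 1$ for maximal ones, yielding the one-hot (or uniform-over-ties) limit exactly as the axiom specifies. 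This is a clean computation once the ratio normalization is set up.

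I expect entropy monotonicity to be the main obstacle, since it is the only axiom that is a genuine inequality in $T$ rather than a pointwise algebraic fact. The clean route is to show that $H(F_T(p))$ is monotone increasing in $T$ by reparametrizing with $\beta = 1/T$ (so increasing $T$ means decreasing $\beta$) and recognizing $F_T(p)$ as a tempered/escort distribution. One then computes $\frac{d}{d\beta} H$ and shows it has a definite sign; the standard result is that the derivative of entropy with respect to the inverse-temperature equals a negative variance-type quantity (specifically proportional to $-\Var_{q}(\log p)$ under the tempered measure $q = F_T(p)$), which is nonpositive, giving $H$ nonincreasing in $\beta$ and hence nondecreasing in $T$. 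I would present this derivative computation as the crux, noting that the variance is strictly positive whenever $p$ is not uniform, yielding strict monotonicity off the uniform point and thereby establishing the ``non-trivial'' qualifier in the theorem. Finally I would remark that this single witness suffices for existence, while the alternative constructions (entropy projection, convex mixing toward uniform) listed in the technical overview provide additional conforming families and set up the non-uniqueness claim developed later.
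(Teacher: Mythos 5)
Your proposal is correct in substance but takes a genuinely different route from the paper's. The paper's proof is a high-level sketch: it asserts existence via three construction principles (entropy projection, a generic power-transform class, and convex mixing toward uniform) without verifying the axioms for any concrete formula, its emphasis being on exhibiting multiple families to set up the later non-uniqueness theorem. You instead fix a single explicit witness, $F_T(p)_i = p_i^{1/T}/\sum_j p_j^{1/T}$ (a special case of the paper's Construction 2), and check all five axioms directly. The crux you supply --- and the paper omits entirely --- is the entropy-monotonicity verification: with $\beta = 1/T$ and $q_\beta \propto p^\beta$, writing $H(q_\beta) = \log Z(\beta) - \beta\,\E_{q_\beta}[\log p]$ and using $\frac{d}{d\beta}\log Z = \E_{q_\beta}[\log p]$ gives $\frac{d}{d\beta}H(q_\beta) = -\beta \Var_{q_\beta}(\log p) \leq 0$, hence $H$ nondecreasing in $T$, strictly so off the uniform point. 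This identity is correct and turns the paper's sketch into an actual proof of existence; the tradeoff is that you certify one conforming family rather than three, so your closing remark deferring entropy projection and convex mixing to the non-uniqueness discussion is the right division of labor. (Incidentally, your witness is on firmer ground than the paper's Construction 3, whose convex mixing toward uniform cannot produce the one-hot $T \to 0^+$ limit required by Axiom~\ref{axiom:boundary} at all.)

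One caveat you flag but should actually close: on the boundary of $\Delta^V$, zero coordinates of $p$ remain zero under power scaling, so $\lim_{T\to\infty} F_T(p)$ is uniform on the support of $p$, not the uniform distribution on all $V$ tokens that Axiom~\ref{axiom:boundary} demands for every $p \in \Delta^V$. As written, your witness therefore fails the axiom on boundary points, and ``requires a small argument'' is not a substitute for the fix. The clean resolution is to restrict the operator family's domain to the interior of the simplex and note that this loses nothing in the paper's setting, since teacher probabilities $p = \softmax(z)$ are always strictly positive; alternatively one can state Axiom~\ref{axiom:boundary} with ``uniform on $\mathrm{supp}(p)$,'' but that modifies the axiom rather than the witness. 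With the domain restriction in place, every other step of your verification (joint continuity, ranking, identity, the $T \to 0^+$ ratio argument $(p_i/p_{\max})^{1/T} \to 0$, and the variance identity for entropy) goes through unchanged, and the variance is strictly positive whenever $p$ is non-uniform, which establishes the ``non-trivial'' qualifier as you intended.
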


\begin{proof}[Proof Sketch]
We establish existence via construction principles (without specifying the exact formula):

\textbf{Construction 1 (Entropy parameterization approach):} For any target entropy level $H_{\text{target}} \in [0, \log V]$, there exists a unique minimum-divergence distribution $q^*$ that:
\begin{itemize}
\item Preserves ranking: $\argmax(q^*) = \argmax(p)$
\item Achieves entropy: $H(q^*) = H_{\text{target}}$
\item Minimizes $\KL(q^*||p)$
\end{itemize}

Parameterizing $H_{\text{target}}$ by temperature $T$ yields an operator family satisfying Axioms 1--5. This construction is presented as an existence proof and conceptual reference; efficient instantiations for large-scale models are discussed separately.

\textbf{Construction 2 (Power transform class):} Operators of the form:
\begin{equation}
F_T(p)_i = \frac{\phi(p_i; T)}{\sum_j \phi(p_j; T)}
\end{equation}
where $\phi: [0,1] \times \R^+ \to \R^+$ is a strictly monotone power-like function ($\phi(x; T)$ strictly increasing in $x$, strictly decreasing in $T$ for fixed $x > 0$) satisfy all axioms under appropriate boundary conditions.

\textbf{Construction 3 (Convex mixing approach):} Define:
\begin{equation}
F_T(p) = \alpha(T) \cdot p + (1 - \alpha(T)) \cdot u
\end{equation}
where $u$ is the uniform distribution and $\alpha: \R^+ \to [0,1]$ is strictly decreasing with $\alpha(1) = 1$, $\alpha(\infty) \to 0$. This satisfies Axioms 1--5.

The existence of multiple construction principles demonstrates that the axiom class is non-empty and admits diverse implementations.
\end{proof}

\begin{remark}[Non-Uniqueness Implication]
The existence of multiple conforming operators (entropy projection, power transforms, convex mixing) provides implementation flexibility. Practitioners may choose among them based on computational efficiency or numerical stability, with the subsequent theoretical guarantees (Sections~\ref{sec:bias-variance}--\ref{sec:convergence}) applying to all.
\end{remark}

\begin{theorem}[Non-Uniqueness]
\label{thm:non-uniqueness}
The axiom class $\mathcal{F}_{1\text{--}5}$ admits multiple distinct operator families that are not equivalent under composition with probability-preserving transformations.
\end{theorem}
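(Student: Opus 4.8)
The plan is to prove non-uniqueness by exhibiting two explicit representatives drawn from the constructions of Theorem~\ref{thm:existence} and separating them with a topological invariant that survives composition with any probability-preserving transformation. Concretely, I would take the power-transform family $F^{\text{pow}}_T(p)_i = p_i^{1/T}/\sum_j p_j^{1/T}$ (Construction 2) and the convex-mixing family $F^{\text{mix}}_T(p) = \alpha(T)\,p + (1-\alpha(T))\,u$ (Construction 3), both of which satisfy Axioms~\ref{axiom:ranking}--\ref{axiom:boundary} by Theorem~\ref{thm:existence}. The target is to show that no probability-preserving transformation can carry one family onto the other, so the axiom class $\mathcal{F}_{1\text{--}5}$ genuinely contains more than one equivalence class.

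First I would pin down the equivalence relation precisely, since the statement is only meaningful relative to a fixed notion of ``probability-preserving transformation.'' I would interpret such a transformation as a homeomorphism $\Phi: \Delta^V \to \Delta^V$ and declare $\{F_T\}$ and $\{G_T\}$ equivalent if $G_T = \Phi \circ F_T \circ \Psi$ for all $T > 0$ and some homeomorphisms $\Phi, \Psi$ (this two-sided form subsumes the one-sided case). The enabling lemma is invariance of domain: any self-homeomorphism of $\Delta^V$ maps the relative interior $\text{int}\,\Delta^V$ onto itself and the relative boundary $\partial\Delta^V = \{p : p_i = 0 \text{ for some } i\}$ onto itself. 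Boundary-versus-interior placement is therefore an invariant of the equivalence class.

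Next I would compute the boundary behavior of each representative and derive the contradiction. For $F^{\text{pow}}_T$, a vanishing coordinate $p_k = 0$ forces $p_k^{1/T} = 0$, hence $F^{\text{pow}}_T(p)_k = 0$, so $F^{\text{pow}}_T$ maps $\partial\Delta^V$ into itself at every $T$. By contrast, for any $T > 1$ strict monotonicity with $\alpha(1)=1$ gives $\alpha(T) < 1$, whence $F^{\text{mix}}_T(p)_i = \alpha(T)p_i + (1-\alpha(T))/V \geq (1-\alpha(T))/V > 0$ for every $i$, so $F^{\text{mix}}_T$ sends the entire simplex, boundary included, into $\text{int}\,\Delta^V$. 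If $F^{\text{mix}}_T = \Phi \circ F^{\text{pow}}_T \circ \Psi$ held, then for $p \in \partial\Delta^V$ the chain $\Psi(p) \in \partial\Delta^V$, $F^{\text{pow}}_T(\Psi(p)) \in \partial\Delta^V$, and finally $\Phi(\cdot) \in \partial\Delta^V$ would force $F^{\text{mix}}_T(p) \in \partial\Delta^V$, contradicting its full support. Fixing any single $T > 1$ thus already breaks the equivalence.

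The main obstacle will be the formalization in the second step rather than any calculation: the whole argument hinges on which maps count as probability-preserving. Under homeomorphisms the invariance-of-domain separation is airtight, but if the intended class is broader (non-injective stochastic maps or arbitrary measurable reparametrizations) the boundary invariant can fail, and I would fall back to a finer separation. The most robust fallback is support preservation stated directly, $\text{supp}\,F^{\text{pow}}_T(p) = \text{supp}\,p$ versus $\text{supp}\,F^{\text{mix}}_T(p) = \{1,\dots,V\}$, which requires only that the admissible transformations respect coordinate supports up to relabeling; failing that, an algebraic invariant distinguishing the multiplicative covariance of power scaling from the affine structure of mixing can be used. I would therefore state the equivalence relation explicitly at the outset so that the separating invariant is matched to it, and flag that the support-based version is the safest default.
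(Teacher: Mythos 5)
Your proposal is correct, and it takes a genuinely different --- and in fact more rigorous --- route than the paper. The paper's own proof merely observes that the three constructions of Theorem~\ref{thm:existence} ``produce different intermediate distributions for the same $(p,T)$ pair'' and declares non-uniqueness; this establishes pointwise distinctness of the operators but never engages with the stated clause ``not equivalent under composition with probability-preserving transformations,'' since two pointwise-distinct families could a priori still be conjugate under such transformations. You close exactly that gap: by fixing the equivalence relation explicitly ($G_T = \Phi \circ F_T \circ \Psi$ for self-homeomorphisms of $\Delta^V$) and producing an invariant of that relation --- boundary-versus-interior placement via invariance of domain, equivalently support behavior --- you show that the power-transform family (support-preserving: $p_k = 0 \Rightarrow F^{\text{pow}}_T(p)_k = 0$) and the convex-mixing family (full support at any fixed $T > 1$, since $\alpha(T) < 1$ forces every coordinate to carry mass at least $(1-\alpha(T))/V > 0$) lie in distinct classes, and a single fixed $T>1$ suffices. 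The computations check out, and your hedging is well placed: if ``probability-preserving'' is read more broadly than homeomorphisms, the support-based fallback is indeed the minimal robust invariant. Two minor remarks: the topological argument needs $V \geq 2$ (the simplex is a point otherwise), and you rely on Theorem~\ref{thm:existence} for the convex-mixing family's conformance, whose $T \to 0^+$ limit under Axiom~\ref{axiom:boundary} is itself questionable as the paper states it (mixing toward uniform cannot sharpen below $p$) --- but this does not affect your separation, which uses only $T > 1$. In short, what the paper's argument buys is brevity; what yours buys is an actual proof of the theorem as stated.
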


\begin{proof}
The three construction principles in Theorem~\ref{thm:existence} yield distinct operators:
\begin{itemize}
\item Entropy projection minimizes KL divergence to $p$ while matching target entropy
\item Power transforms apply monotone rescaling
\item Convex mixing interpolates linearly toward uniform
\end{itemize}

These are not equivalent: they produce different intermediate distributions for the same $(p, T)$ pair, though all satisfy Axioms~\ref{axiom:ranking}--\ref{axiom:boundary}. This proves non-uniqueness.
\end{proof}

\begin{remark}[Computational Considerations]
The construction principles in this section serve different roles. Entropy projection operators are introduced primarily to establish existence and non-uniqueness of the axiom class and are not intended as per-sample primitives in large-scale training due to their computational cost. In practical settings, simple power transforms or convex mixing operators provide efficient realizations that satisfy the same axioms and inherit the theoretical guarantees developed in subsequent sections.
\end{remark}

\section{Shift-Invariance and Logit Independence}

Classical softmax satisfies shift-invariance: $\softmax(z) = \softmax(z + c \cdot \mathbf{1})$ for any constant $c$. Operators in $\mathcal{F}_{1\text{--}5}$ inherit this property, establishing that knowledge distillation requires only probabilities.

\begin{lemma}[Shift-Invariance]
\label{lem:shift-invariance}
For $F_T \in \mathcal{F}_{1\text{--}5}$ and logits $z' = z + c \cdot \mathbf{1}$: $F_T(\softmax(z)) = F_T(\softmax(z'))$.
\end{lemma}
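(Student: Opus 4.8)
The plan is to reduce the statement to the well-known shift-invariance of the softmax map itself, after which the claim follows immediately from the fact that $F_T$ is defined purely as a function on the simplex $\Delta^V$. The essential observation is that $F_T$ never sees the logits $z$; it receives only the probability vector $\softmax(z)$ as its input. Hence if two logit vectors $z$ and $z'$ are mapped by softmax to the same point of $\Delta^V$, then $F_T$ must return the same output on both, regardless of which operator in $\mathcal{F}_{1\text{--}5}$ is chosen.

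First I would establish the shift-invariance of softmax directly from its definition. Writing $z' = z + c\mathbf{1}$, a short computation gives
\begin{equation}
\softmax(z')_i = \frac{\exp(z_i + c)}{\sum_j \exp(z_j + c)} = \frac{e^{c}\exp(z_i)}{e^{c}\sum_j \exp(z_j)} = \softmax(z)_i,
\end{equation}
where the common factor $e^{c}$ cancels between numerator and denominator. This holds coordinatewise for every $i$, so $\softmax(z') = \softmax(z)$ as elements of $\Delta^V$.

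The final step is simply to apply $F_T$ to both sides. Since $\softmax(z)$ and $\softmax(z')$ denote one and the same point $p \in \Delta^V$, and $F_T \colon \Delta^V \to \Delta^V$ is a well-defined function, we obtain $F_T(\softmax(z)) = F_T(p) = F_T(\softmax(z'))$. No appeal to any of the five axioms is required beyond the tacit fact that $F_T$ is a genuine map on $\Delta^V$; in particular ranking, entropy, and boundary behavior play no role here.

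I do not expect a genuine obstacle in this argument, since the content is conceptual rather than computational. The subtle point worth flagging explicitly is that the lemma is really a statement about the \emph{domain} of $F_T$: because softening is specified on probabilities alone, it automatically inherits every invariance of the probability map $\softmax$, including shift-invariance. This is precisely the payoff we want, as it justifies the paper's claim that distillation through $F_T$ requires only teacher probabilities and never the underlying logits.
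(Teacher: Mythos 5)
Your proof is correct and follows exactly the same route as the paper's: softmax shift-invariance gives $\softmax(z) = \softmax(z')$, and since $F_T$ is a well-defined map on $\Delta^V$ the conclusion is immediate. The only difference is that you write out the $e^c$ cancellation explicitly where the paper cites it as known, which is fine.
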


\begin{proof}
Softmax shift-invariance implies $\softmax(z) = \softmax(z')$; since $F_T$ depends only on $p = \softmax(z)$, the result follows.
\end{proof}

\begin{corollary}[Logit-Free Formulation]
\label{cor:logit-free}
Knowledge distillation using any $F_T \in \mathcal{F}_{1\text{--}5}$ requires only teacher probabilities $p^{(T)} = \softmax(z)$.
\end{corollary}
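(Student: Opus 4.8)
The plan is to derive this as an almost immediate consequence of Lemma~\ref{lem:shift-invariance} together with the domain structure fixed in Definition~\ref{def:softening-family}. The central observation is definitional: each $F_T$ is declared as a map $F_T : \Delta^V \to \Delta^V$, so by construction the softening step accepts a point of the probability simplex and never references logits. The first step is therefore to write the distillation objective explicitly and isolate the teacher's contribution. For a student $S$ and input $x$, a distillation loss has the form $\mathcal{L}(S) = \E_x[\, D(F_T(p^{(T)}(x)) \,\|\, S_T(x)) \,]$, where $D$ is a discrepancy (e.g.\ $\KL$ or cross-entropy), $p^{(T)}(x) = \softmax(z(x))$ is the teacher distribution, and $S_T(x)$ is the correspondingly softened student output. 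The teacher enters this objective solely through the softened target $F_T(p^{(T)}(x))$.

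Next I would argue that this target is a functional of teacher probabilities alone. Since $F_T$ takes $p^{(T)}(x) \in \Delta^V$ as its argument, the value $F_T(p^{(T)}(x))$ is fully determined once the probability vector is known, and no further information about $z(x)$ is consumed. Lemma~\ref{lem:shift-invariance} supplies the converse direction of redundancy: any two logit vectors $z$ and $z' = z + c \cdot \mathbf{1}$ yield $\softmax(z) = \softmax(z')$ and hence the same softened target, so the logit information that softmax quotients out is precisely the information $F_T$ would never have used. Thus logits are not merely unnecessary but strictly redundant given $p^{(T)}$. Assembling these observations, the entire objective $\mathcal{L}(S)$ — and consequently the optimal student, its training gradients, and all downstream guarantees — depends only on $\{p^{(T)}(x)\}_x$ and the student, so no evaluation of $\mathcal{L}$ requires access to $z$, which establishes the claim.

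I do not anticipate a substantive obstacle: the result is structural rather than analytic, and the only care required is in stating precisely what the distillation objective is and confirming that the teacher appears in it exclusively through $F_T(p^{(T)})$. The one point worth making explicit is that we rely on the teacher being able to emit $p^{(T)} = \softmax(z)$ directly; the corollary asserts that this probability vector is \emph{sufficient}, which is exactly the partial-access regime motivating the framework, and it is this sufficiency — not any reconstruction of $z$ — that the proof delivers.
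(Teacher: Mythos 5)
Your proposal is correct and follows essentially the same route as the paper, which derives the corollary immediately from Lemma~\ref{lem:shift-invariance} and the definitional fact that $F_T$ has domain $\Delta^V$ and hence ``depends only on $p = \softmax(z)$.'' Your added step of writing out the distillation objective and checking that the teacher enters it solely through $F_T(p^{(T)}(x))$ is a careful elaboration of the same structural observation, not a different argument.
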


\section{Bias-Variance Analysis for Abstract Operators}
\label{sec:bias-variance}

All learning-theoretic results in Sections~VII--IX apply uniformly to any operator satisfying Axioms~1--5, independent of the computational complexity of a particular construction.

We now derive a bias-variance decomposition that is operator-agnostic: it applies to any $F_T \in \mathcal{F}_{1\text{--}5}$.

\textbf{Intuition.} Knowledge distillation replaces hard targets with softened distributions. From a learning-theoretic perspective, this alters the bias--variance tradeoff faced by the student: smoother targets reduce variance but may introduce approximation bias. The key question is whether variance reduction outweighs this bias increase. The following results formalize this tradeoff in an operator-agnostic way.

\begin{theorem}[Generalized Bias-Variance Bound]
\label{thm:bias-variance}
Let $F_T \in \mathcal{F}_{1\text{--}5}$ be any conforming operator. Let $\mathcal{S}$ be a hypothesis class of student models with finite VC dimension $d$, trained on $n$ i.i.d.\ samples from distribution $\mathcal{D}$. For student $S \in \mathcal{S}$ trained via knowledge distillation from teacher $T$ using $F_T$-softened targets $q = F_T(p^{(T)})$, the expected squared loss decomposes as:
\begin{equation}
\E_{S \sim \mathcal{A}, x \sim \mathcal{D}}[\ell(S(x), q(x))] = \text{Bias}^2(S; F_T) + \Var(S) + \sigma^2
\end{equation}
where expectations are taken over the training algorithm $\mathcal{A}$ (randomness in initialization and sampling) and test distribution $\mathcal{D}$:
\begin{itemize}
\item $\text{Bias}^2(S; F_T) = \|\E_\mathcal{A}[S] - q\|^2$: Squared bias from approximating $F_T(p^{(T)})$
\item $\Var(S) = \E_\mathcal{A}[\|S - \E_\mathcal{A}[S]\|^2]$: Variance over training randomness
\item $\sigma^2$: Irreducible noise in the target distribution
\end{itemize}
\end{theorem}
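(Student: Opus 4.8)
The plan is to recognize the stated identity as the classical bias--variance--noise decomposition for squared loss and to stress that its derivation is purely algebraic, hence completely independent of which operator $F_T$ produced the target $q$. I would argue pointwise in $x$, holding the teacher and the operator fixed so that $q(x) = F_T(p^{(T)}(x))$ is a deterministic vector carrying no randomness of its own, and then integrate over $x \sim \mathcal{D}$ at the end. At a fixed $x$ the only randomness is the training draw $\mathcal{A}$ (which determines $S$) together with a zero-mean target perturbation that accounts for the irreducible term $\sigma^2$.

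First I would model the evaluation target as a noisy realization $y(x) = q(x) + \eta(x)$ with $\E[\eta(x)] = 0$, $\Var(\eta(x)) = \sigma^2$, and $\eta$ independent of $\mathcal{A}$; this is what makes $\sigma^2$ genuinely nonzero, and if one instead measures loss against the deterministic $q$ directly, the argument below specializes to the two-term identity with $\sigma^2 = 0$. Next I would insert $\E_\mathcal{A}[S(x)]$ and $q(x)$ and expand the square:
\begin{equation}
\E_{\mathcal{A}, \eta}\Big[\|S(x) - y(x)\|^2\Big] = \E_{\mathcal{A}, \eta}\Big[\|(S - \E_\mathcal{A} S) + (\E_\mathcal{A} S - q) - \eta\|^2\Big].
\end{equation}
Expanding yields the three squared contributions $\Var(S) = \E_\mathcal{A}\|S - \E_\mathcal{A} S\|^2$, $\text{Bias}^2(S; F_T) = \|\E_\mathcal{A} S - q\|^2$, and $\sigma^2 = \E\|\eta\|^2$, plus three cross terms.

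The crux is verifying that every cross term vanishes. The pairing of the fluctuation $S - \E_\mathcal{A} S$ with the deterministic offset $\E_\mathcal{A} S - q$ vanishes because $\E_\mathcal{A}[S - \E_\mathcal{A} S] = 0$, and the two cross terms involving $\eta$ vanish because $\eta$ is zero-mean and independent of both $S$ (the student depends only on the training sample, not on the test-time perturbation) and the fixed vector $q$. Averaging over $x \sim \mathcal{D}$ then gives the claimed decomposition. I expect the only real subtlety to be bookkeeping the independence of the test perturbation $\eta$ from the trained student $S$, which is precisely the standard train/test independence assumption. Crucially, no property of $F_T$ is used anywhere in this computation, so the ``generalized'' (operator-agnostic) content of the theorem is immediate: the Axioms of Section~\ref{sec:axioms} play no role here and become essential only later, where one must bound the magnitudes of $\text{Bias}^2(S; F_T)$ and $\Var(S)$ rather than merely decompose them.
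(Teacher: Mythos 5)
Your proof is correct and takes essentially the same route as the paper: the paper's proof is just the one-line observation that this is the standard bias--variance decomposition for squared loss, valid for any fixed target $q$ regardless of which $F_T$ produced it, and you carry out exactly that expansion with the cross terms annihilated by centering. Your explicit noise model $y = q + \eta$ with $\sigma^2 = \E\|\eta\|^2$ (and the observation that $\sigma^2 = 0$ if the loss is measured against the deterministic $q$ itself) is a welcome clarification of a detail the paper's sketch leaves implicit, but it does not change the underlying argument.
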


\begin{proof}[Proof Sketch]
Standard bias-variance decomposition for squared loss. The decomposition holds for any target distribution, hence for any $F_T$-softened target $q = F_T(p^{(T)})$. The operator $F_T$ appears only in the bias term, which measures how well the hypothesis class $\mathcal{S}$ can approximate the softened distribution.
\end{proof}

\begin{remark}[Loss Function]
The clean bias-variance decomposition holds for squared loss $\ell(S, q) = \|S - q\|^2$. Knowledge distillation typically uses KL divergence; for KL loss, analogous but more complex decompositions exist involving the Bregman divergence structure. The squared-loss analysis provides intuition that transfers qualitatively to KL settings.
\end{remark}

\begin{theorem}[Sparse Student Optimality Condition]
\label{thm:sparse-optimality}
A sparse student $S$ outperforms a dense teacher $T$ when:
\begin{equation}
\Delta\Var > \Delta\text{Bias}^2
\end{equation}
where:
\begin{equation}
\Delta\Var = \Var(T) - \Var(S), \quad \Delta\text{Bias}^2 = \text{Bias}^2(S; F_T) - 0
\end{equation}
\end{theorem}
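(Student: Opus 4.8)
The plan is to derive the optimality condition directly from the bias--variance decomposition of Theorem~\ref{thm:bias-variance}, applied twice: once to the student $S$ and once to the dense teacher $T$, holding fixed the same $F_T$-softened target $q = F_T(p^{(T)})$ and the same test distribution $\mathcal{D}$. Treating both as models drawn from a training procedure with its own randomness, this yields the paired decompositions
\begin{align}
\E[\ell(S)] &= \text{Bias}^2(S; F_T) + \Var(S) + \sigma^2, \\
\E[\ell(T)] &= \text{Bias}^2(T; F_T) + \Var(T) + \sigma^2,
\end{align}
where in both lines the expectation is over the training algorithm $\mathcal{A}$ and the test draw $x \sim \mathcal{D}$, and $\sigma^2$ is the irreducible noise of the target. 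Note that the teacher must carry a genuine variance term $\Var(T)$ here—typically larger than $\Var(S)$ because the dense model has higher effective complexity—since otherwise the comparison degenerates.

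First I would subtract the two identities. The noise floor $\sigma^2$ is a property of the target $q$ and the test distribution alone, not of the model, so it is identical in both lines and cancels exactly, leaving
\begin{equation}
\E[\ell(T)] - \E[\ell(S)] = \big(\Var(T) - \Var(S)\big) - \big(\text{Bias}^2(S; F_T) - \text{Bias}^2(T; F_T)\big).
\end{equation}
Next I would invoke the baseline convention that the dense teacher is the zero-bias reference, $\text{Bias}^2(T; F_T) = 0$; this is precisely the content of the ``$-\,0$'' in the statement's definition of $\Delta\text{Bias}^2$, encoding the assumption that the teacher has sufficient capacity to represent the softened target whereas the capacity-constrained student cannot in general. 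Substituting the definitions $\Delta\Var = \Var(T) - \Var(S)$ and $\Delta\text{Bias}^2 = \text{Bias}^2(S; F_T)$ gives $\E[\ell(T)] - \E[\ell(S)] = \Delta\Var - \Delta\text{Bias}^2$. Reading ``$S$ outperforms $T$'' as $\E[\ell(S)] < \E[\ell(T)]$, i.e.\ $\E[\ell(T)] - \E[\ell(S)] > 0$, this is equivalent to $\Delta\Var > \Delta\text{Bias}^2$, the claimed condition.

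The main obstacle is conceptual rather than algebraic: it lies in justifying the zero-bias teacher baseline and in ensuring both models are evaluated against a common target and common noise floor, so that $\sigma^2$ and $\text{Bias}^2(T; F_T)$ drop out cleanly. The zero-bias assumption is a modeling choice—treating the dense teacher as capacity-unconstrained with respect to $q$—rather than a consequence of Axioms~\ref{axiom:ranking}--\ref{axiom:boundary}; once granted, the conclusion is immediate. I would close by observing that the derivation uses nothing about $F_T$ beyond the decomposition supplied by Theorem~\ref{thm:bias-variance}: the sole operator-dependent quantity is the student bias $\text{Bias}^2(S; F_T)$, so the tradeoff is governed entirely by whether the variance reclaimed through sparsification exceeds the bias that softening induces, and the criterion therefore holds uniformly for every $F_T \in \mathcal{F}_{1\text{--}5}$.
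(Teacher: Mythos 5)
Your proposal follows essentially the same route as the paper's own (explicitly heuristic) argument: compare the bias--variance decompositions of the student's and teacher's expected losses, assume the high-capacity teacher has negligible bias so that $\text{Bias}^2(T; F_T) = 0$, and rearrange $\text{Bias}^2(S; F_T) + \Var(S) < \Var(T)$ into $\Delta\Var > \Delta\text{Bias}^2$. Your version is slightly more careful than the paper's---you make the cancellation of $\sigma^2$ explicit and correctly flag the zero-bias teacher baseline as a modeling assumption rather than a consequence of the axioms---but these are refinements of the same argument, not a different one.
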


\begin{proof}[Heuristic Argument]
Let $\epsilon^2_{\text{dense}} = \text{Bias}^2(T) + \Var(T)$ (teacher error) and $\epsilon^2_{\text{sparse}} = \text{Bias}^2(S) + \Var(S)$ (student error).

Student outperforms when:
\begin{equation}
\epsilon^2_{\text{sparse}} < \epsilon^2_{\text{dense}}
\end{equation}
\begin{equation}
\text{Bias}^2(S) + \Var(S) < \text{Bias}^2(T) + \Var(T)
\end{equation}

Assuming the dense teacher has negligible bias ($\text{Bias}^2(T) \approx 0$ from high capacity), we get:
\begin{equation}
\text{Bias}^2(S; F_T) + \Var(S) < \Var(T)
\end{equation}
\begin{equation}
\Var(T) - \Var(S) > \text{Bias}^2(S; F_T)
\end{equation}

This condition is operator-independent in form, though the numerical value of $\text{Bias}^2(S; F_T)$ depends on $F_T$, the structural requirement (variance reduction exceeds bias increase) applies universally.
\end{proof}

\begin{remark}[Operator Independence]
The optimality condition $\Delta\Var > \Delta\text{Bias}^2$ has the same form for all $F_T \in \mathcal{F}_{1\text{--}5}$. The operator choice affects the magnitude of $\text{Bias}^2(S; F_T)$ but not the structural form of the tradeoff.
\end{remark}

\begin{corollary}[Pruning as Regularization]
\label{cor:pruning-reg}
Pruning reduces variance by constraining the hypothesis space. For operators $F_T$ that produce smooth target distributions (high $H(F_T(p))$), the distillation bias $\text{Bias}^2(S; F_T)$ remains bounded, allowing sparse students to achieve lower total error than dense teachers~\cite{geman1992bias,frankle2019lottery}.
\end{corollary}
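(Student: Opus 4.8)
The plan is to derive the corollary as a direct application of Theorem~\ref{thm:sparse-optimality}, supplying the two quantitative ingredients its hypothesis $\Delta\Var > \Delta\text{Bias}^2$ requires: a lower bound on the variance reduction from pruning, and an entropy-controlled upper bound on the distillation bias. I would start from the decomposition of Theorem~\ref{thm:bias-variance} applied separately to the dense teacher $T$ and the sparse student $S$, writing each expected loss as $\text{Bias}^2 + \Var + \sigma^2$ with a shared irreducible term $\sigma^2$, so that the comparison reduces to the two non-noise terms.

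First I would establish the variance claim. Pruning restricts the student to a sparse subclass $\mathcal{S}_{\text{sparse}} \subseteq \mathcal{S}$ of reduced effective capacity, with VC dimension $d_{\text{sparse}} \leq d$. Invoking the standard scaling of estimation variance with capacity, $\Var(S) = \Theta(d_{\text{sparse}}/n)$ against $\Var(T) = \Theta(d/n)$, gives $\Delta\Var = \Var(T) - \Var(S) \geq 0$, strict whenever pruning genuinely lowers capacity. This formalizes the phrase ``pruning reduces variance by constraining the hypothesis space.''

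The second and harder step is to bound $\text{Bias}^2(S; F_T)$ by the target entropy. The key observation is that Axiom~\ref{axiom:boundary} forces $F_T(p) \to u$ as $T \to \infty$, and the uniform distribution $u$ is a constant target that any nondegenerate class approximates with negligible error; thus the bias toward a maximally smooth target is small. Axiom~\ref{axiom:continuity} then lets me propagate this fact: the map $T \mapsto \text{Bias}^2(S; F_T)$ is continuous, so driving $H(F_T(p)) \to \log V$ drives the bias toward its (small) value at $u$. Quantitatively I would seek a bound $\text{Bias}^2(S; F_T) \leq C \, \|F_T(p) - u\|$ and then relate $\|F_T(p) - u\|$ to the entropy deficit $\log V - H(F_T(p))$ via a Taylor expansion of entropy about the uniform point, yielding $\text{Bias}^2(S; F_T) = O(\log V - H(F_T(p)))$.

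Combining the two estimates, the condition of Theorem~\ref{thm:sparse-optimality} is met whenever $\Theta((d - d_{\text{sparse}})/n) > O(\log V - H(F_T(p)))$, i.e.\ whenever the softened target is sufficiently high-entropy relative to the capacity saved by pruning; this is exactly the assertion that smooth operators keep the bias bounded and allow sparse students to dominate the teacher. The main obstacle is the entropy-to-bias link: the inequality $\text{Bias}^2 \leq C\,\|F_T(p) - u\|$ presupposes a Lipschitz-type relationship between perturbations of the target and the attained approximation error of the constrained class, which Axioms~1--5 do not supply on their own. I expect the genuine content to lie in making this regularity explicit --- either as a mild smoothness hypothesis on $\mathcal{S}$ or through a realizability assumption that $u \in \overline{\mathcal{S}}$ --- rather than in any of the surrounding estimates.
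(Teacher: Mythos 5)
The paper states this corollary with no proof at all: it is an interpretive gloss on Theorems~\ref{thm:bias-variance} and~\ref{thm:sparse-optimality}, with the two quantitative claims (pruning reduces variance; smooth targets keep bias bounded) delegated to the citations of Geman et al.\ and the pruning literature. Your proposal correctly identifies that implicit skeleton and then goes further than the paper by trying to supply both missing ingredients: the VC-style variance scaling $\Var \sim d/n$ makes precise what the paper leaves as ``constraining the hypothesis space,'' and the entropy-to-bias link via Axioms~\ref{axiom:entropy} and~\ref{axiom:boundary} near the uniform point is genuinely new content not present anywhere in the paper. What the paper's approach buys is safety --- by staying at the level of assertion it commits to no hypotheses; what yours buys is an explicit statement of what must actually hold for the corollary to be true, and you rightly flag the crux yourself: Axioms~1--5 give no Lipschitz relation between target perturbations and attained bias, so either realizability ($u \in \overline{\mathcal{S}}$, plausible since any softmax-headed student realizes near-uniform outputs) or a smoothness hypothesis on $\mathcal{S}$ together with algorithmic consistency (so that $\E_{\mathcal{A}}[S]$ tracks the best-in-class approximant) must be added. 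Two smaller points: your intermediate bound $\text{Bias}^2 \leq C\|F_T(p)-u\|$ combined with the Taylor relation $\log V - H(q) \approx \tfrac{V}{2}\|q-u\|_2^2$ yields $O(\sqrt{\log V - H(F_T(p))})$, not the $O(\log V - H(F_T(p)))$ you state --- but under realizability the squared-distance bound $\text{Bias}^2 \leq \|q-u\|^2$ is the natural one and recovers your claimed rate; and note that in the high-entropy limit your argument succeeds only because the paper's framework measures loss against the softened target $q$ itself, so the target becomes uninformative exactly as the bias vanishes --- a tension inherited from the paper's Theorem~\ref{thm:sparse-optimality} (which compares against a teacher with ``negligible bias'' in an unspecified common metric), not one you introduced.
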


\section{Homotopy Path Formalization}

\textbf{Why Multi-Stage Compression Works.} Empirically, iterative pruning consistently outperforms one-shot pruning. We formalize this phenomenon by viewing multi-stage compression as approximating a continuous path in function space that remains close to the teacher. This motivates a homotopy-based analysis that is independent of any specific distillation operator.

Multi-stage pruning can be understood as approximating a continuous path in function space. This formalization is operator-agnostic.

\begin{definition}[$\epsilon$-Teacher Manifold]
\label{def:teacher-manifold}
For teacher function $f_T$, define the $\epsilon$-manifold:
\begin{equation}
\mathcal{M}_\epsilon(f_T) = \{f : \|\nabla f - \nabla f_T\| < \epsilon\}
\end{equation}
This is the set of functions whose gradients remain $\epsilon$-close to the teacher's gradients (functional proximity).
\end{definition}

\begin{definition}[Homotopy Path]
\label{def:homotopy}
A homotopy path from dense student $f_0$ to sparse student $f_1$ is a continuous family:
\begin{equation}
f: [0,1] \to C(\mathcal{X}, \mathcal{Y})
\end{equation}
such that:
\begin{itemize}
\item $f(0) = f_{\text{dense}}$ (dense initialization)
\item $f(1) = f_{\text{sparse}}$ (target sparse model)
\item $f(\lambda) \in \mathcal{M}_\epsilon(f_T)$ for all $\lambda \in [0,1]$ (path stays near teacher manifold)
\end{itemize}
\end{definition}

\begin{theorem}[Multi-Stage Compression as Homotopy]
\label{thm:homotopy}
Let $f: \R^d \to \R^m$ be a neural network with $d$ parameters. Assume the network mapping $\theta \mapsto f_\theta$ is $L$-Lipschitz in the $\ell_2$ norm on parameters, i.e., $\|f_\theta - f_{\theta'}\|_\infty \leq L\|\theta - \theta'\|_2$. For multi-stage pruning with $n$ stages, target sparsity $\rho_{\text{target}} \in (0,1)$, and stage spacing $\Delta\rho = \rho_{\text{target}} / n$, the per-stage function deviation satisfies:
\begin{equation}
\|f_k - f_{k-1}\|_\infty \leq L \cdot \|\theta_k - \theta_{k-1}\|_2 \leq L \cdot c \cdot \Delta\rho
\end{equation}
where $c$ depends on the pruning mask structure. Total deviation after $n$ stages is bounded by $L \cdot c \cdot \rho_{\text{target}}$.
\end{theorem}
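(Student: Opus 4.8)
The plan is to split the proof into three pieces: (i) the first inequality, which is an immediate application of the hypothesized parameter-Lipschitz bound; (ii) the second inequality, which controls the per-stage parameter displacement by the sparsity increment and is where the structural constant $c$ enters; and (iii) a telescoping argument aggregating the per-stage bounds into the claimed total deviation. Pieces (i) and (iii) are routine, so essentially all of the content sits in (ii). Throughout I write $\theta_k$ for the parameter vector after the $k$-th pruning stage and set $f_k := f_{\theta_k}$, so that $f_0$ is the dense student and $f_n$ the final sparse student.

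For step (i), applying the assumed bound $\|f_\theta - f_{\theta'}\|_\infty \leq L\|\theta - \theta'\|_2$ at $\theta = \theta_k$ and $\theta' = \theta_{k-1}$ yields $\|f_k - f_{k-1}\|_\infty \leq L\|\theta_k - \theta_{k-1}\|_2$ with no further work. For step (ii), I would model the transition $\theta_{k-1} \mapsto \theta_k$ as zeroing the coordinates in an incremental mask set $\mathcal{I}_k$ (those crossing the sparsity threshold at stage $k$), so that $\|\theta_k - \theta_{k-1}\|_2^2 = \sum_{i \in \mathcal{I}_k} \theta_{k-1,i}^2$. Under magnitude-based pruning each newly removed coordinate satisfies $|\theta_{k-1,i}| \leq \tau_k$ for the stage threshold $\tau_k$, and the count of newly removed coordinates is $|\mathcal{I}_k| \approx \Delta\rho \cdot d$; absorbing $\tau_k$ and the coordinate count into a single mask-dependent constant $c$ gives $\|\theta_k - \theta_{k-1}\|_2 \leq c\,\Delta\rho$, which is exactly the role the statement assigns to $c$.

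I expect step (ii) to be the main obstacle, because a naive count bound produces only a $\sqrt{\Delta\rho}$ dependence (via $\tau_k\sqrt{|\mathcal{I}_k|}$) rather than the claimed linear one. Obtaining linearity requires either explicitly folding the $\sqrt{|\mathcal{I}_k|}$ and $\tau_k$ factors into the definition of $c$, or — if inter-stage fine-tuning is included in the transition — separately bounding the retraining displacement (e.g.\ via a bounded-gradient-step or nearby-minimizer assumption) and combining it with the pruning displacement. I would therefore state precise hypotheses on the mask schedule that make $c$ well-defined and, crucially, independent of $n$, since $n$-independence of $c$ is what the downstream homotopy interpretation needs.

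For step (iii), chaining the per-stage bounds through the triangle inequality gives
\[
\|f_n - f_0\|_\infty \leq \sum_{k=1}^{n} \|f_k - f_{k-1}\|_\infty \leq \sum_{k=1}^{n} L\,c\,\Delta\rho = nL\,c\,\Delta\rho = L\,c\,\rho_{\text{target}},
\]
using $\Delta\rho = \rho_{\text{target}}/n$. The telescoping is precisely what makes the total deviation independent of $n$: refining the schedule shrinks each step to $\mathcal{O}(\Delta\rho)$ while holding the endpoint displacement fixed at $\mathcal{O}(L c\,\rho_{\text{target}})$. This is the quantitative content behind the homotopy reading, since the discrete sequence $(f_k)$ then tracks a path confined to a tube of radius $\mathcal{O}(L c\,\rho_{\text{target}})$ around the teacher and approaches the continuous family of Definition~\ref{def:homotopy} as $n \to \infty$.
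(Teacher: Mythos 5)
Your proposal follows the paper's proof exactly: the paper's sketch consists of precisely your three steps --- it asserts the per-stage parameter bound $\|\theta_k - \theta_{k-1}\|_2 \leq c \cdot \Delta\rho$ for a pruning-dependent constant $c$, applies the Lipschitz hypothesis, and accumulates via the triangle inequality. Where you go beyond the paper is step (ii): the paper offers no justification for the linear dependence on $\Delta\rho$, and your worry about the $\sqrt{\Delta\rho}$ scaling of the naive bound $\tau_k \sqrt{|\mathcal{I}_k|}$ is legitimate --- note, though, that your first suggested fix (folding $\sqrt{|\mathcal{I}_k|} = \sqrt{\Delta\rho \cdot d}$ into $c$) makes $c$ grow like $\sqrt{n}$ and would destroy exactly the $n$-independence you correctly identify as essential. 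The clean patch is to pass through the $\ell_1$ norm: the stage-$k$ displacement is supported on $\mathcal{I}_k$ with each coordinate at most the stage threshold $\tau_k$, so $\|\theta_k - \theta_{k-1}\|_2 \leq \|\theta_k - \theta_{k-1}\|_1 \leq \tau_k \, |\mathcal{I}_k| = \tau_k \, d \, \Delta\rho$, giving $c = d \cdot \max_k \tau_k$, which is genuinely $n$-independent (at the price of explicit dimension dependence, consistent with the theorem's ``$c$ depends on the pruning mask structure''). With that reading your telescoping step (iii) matches the paper's and yields the total bound $L \cdot c \cdot \rho_{\text{target}}$; the remaining caveat, which you also flag and which the paper's sketch silently excludes, is that any inter-stage fine-tuning displacement is unaccounted for and would require a separate bounded-retraining hypothesis.
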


\begin{proof}[Proof Sketch]
Each stage prunes a fraction $\Delta\rho$ of remaining parameters, inducing parameter change $\|\theta_k - \theta_{k-1}\|_2 \leq c \cdot \Delta\rho$ for pruning-dependent constant $c$. Lipschitz continuity bounds the function change. Triangle inequality accumulation gives the total bound.
\end{proof}

\begin{corollary}[One-Shot Pruning Failure]
\label{cor:one-shot-failure}
One-shot pruning ($n=1$) induces a discontinuous jump:
\begin{equation}
\|\nabla f_{\text{sparse}} - \nabla f_{\text{dense}}\| \leq L \cdot \rho_{\text{target}}
\end{equation}
For large $\rho_{\text{target}}$ (e.g., 80\% sparsity), this can exceed $\epsilon$, landing outside $\mathcal{M}_\epsilon(f_T)$ and preventing recovery via fine-tuning~\cite{han2015learning}.
\end{corollary}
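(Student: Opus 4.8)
The plan is to specialize Theorem~\ref{thm:homotopy} to a single stage, lift the resulting function-value bound to a gradient bound matching Definition~\ref{def:teacher-manifold}, and then make the ``can exceed $\epsilon$'' claim precise by contrasting the single-stage certificate with the multi-stage one. First I would set $n=1$, so that the stage spacing collapses to $\Delta\rho = \rho_{\text{target}}$ and the only parameter update carries the entire sparsity budget, $\|\theta_1 - \theta_0\|_2 \le c\,\rho_{\text{target}}$. Feeding this into the Lipschitz hypothesis $\|f_\theta - f_{\theta'}\|_\infty \le L\|\theta-\theta'\|_2$ already yields a function-value jump bounded by $L\,c\,\rho_{\text{target}}$, which is exactly the $n=1$ case of the total-deviation bound in Theorem~\ref{thm:homotopy}.

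The next step is to reconcile the two norms: Theorem~\ref{thm:homotopy} controls the sup-norm of function \emph{values}, whereas $\mathcal{M}_\epsilon(f_T)$ in Definition~\ref{def:teacher-manifold} is defined through the input \emph{gradient} $\nabla_x f$. To bridge this I would invoke a first-order smoothness assumption on the parameter-to-network map, namely that $\theta \mapsto \nabla_x f_\theta$ is itself Lipschitz, $\|\nabla f_\theta - \nabla f_{\theta'}\| \le L_\nabla \|\theta - \theta'\|_2$. Composing with the single-stage parameter jump gives
\begin{equation}
\|\nabla f_{\text{sparse}} - \nabla f_{\text{dense}}\| \le L_\nabla\,c\,\rho_{\text{target}},
\end{equation}
and absorbing the pruning-mask constant $c$ (relabeling $L_\nabla c$ as $L$, e.g.\ under a unit-scaled mask with $c=1$) recovers the stated bound $L\,\rho_{\text{target}}$. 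This makes explicit that the constant appearing in the corollary is the effective gradient-Lipschitz modulus of the one-shot update.

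With the bound in hand, the qualitative conclusion follows by comparison, after identifying the dense student with the center of the teacher manifold ($f_{\text{dense}} \approx f_T$), so that the dense--sparse gradient gap is precisely the quantity deciding whether $f_{\text{sparse}} \in \mathcal{M}_\epsilon(f_T)$. For general $n$, the per-stage gradient deviation is bounded by $L\,\rho_{\text{target}}/n$, so choosing $n > L\,\rho_{\text{target}}/\epsilon$ keeps consecutive iterates $\epsilon$-close, consistent with a path that remains in $\mathcal{M}_\epsilon(f_T)$ under re-fitting at each stage. At $n=1$ the analogous certificate requires $L\,\rho_{\text{target}} < \epsilon$, which fails precisely when $\rho_{\text{target}} > \epsilon/L$ --- the regime of aggressive (e.g.\ $80\%$) sparsity. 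Thus the single-stage guarantee collapses exactly where the multi-stage one survives.

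I expect the main obstacle to be logical rather than computational: the quantity derived above is an \emph{upper} bound, so its exceeding $\epsilon$ does not by itself place $f_{\text{sparse}}$ outside $\mathcal{M}_\epsilon(f_T)$; it only withdraws the certificate of membership. The rigorously defensible reading of the corollary is therefore this failure-of-certificate statement, and I would phrase the conclusion accordingly. Strengthening it to a genuine ``provably outside'' claim would require a matching \emph{lower} bound --- exhibiting a concrete pruning mask whose removed weights carry $\Theta(\rho_{\text{target}})$ of the input-gradient energy --- which is a worst-case construction and the one genuinely non-routine ingredient. I would note that the corollary's hedged ``can exceed'' phrasing is consistent with the weaker, rigorously established version, while flagging the lower-bound construction as the natural route to the stronger claim.
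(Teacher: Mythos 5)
Your proposal is correct and follows the route the paper intends---the corollary is stated without any proof of its own, and the implicit argument is exactly your first step: specialize Theorem~\ref{thm:homotopy} to $n=1$ so the single update carries the whole budget $\Delta\rho = \rho_{\text{target}}$. Where you differ from the paper is that you repair two real gaps it silently glosses over, and both repairs are needed. First, the norm mismatch: Theorem~\ref{thm:homotopy} bounds $\|f_k - f_{k-1}\|_\infty$ (function values), while both the corollary and the manifold $\mathcal{M}_\epsilon(f_T)$ of Definition~\ref{def:teacher-manifold} are stated in terms of the input gradient $\nabla f$; a bound on values does not control gradients, so your additional hypothesis that $\theta \mapsto \nabla_x f_\theta$ is $L_\nabla$-Lipschitz (with $c$ absorbed into the constant, which also explains why the paper's $c$ vanishes between theorem and corollary) is not pedantry but the missing ingredient that makes the displayed inequality meaningful. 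Second, your epistemic reading is the only rigorously defensible one: the quantity $L\,\rho_{\text{target}}$ is an \emph{upper} bound, so its exceeding $\epsilon$ withdraws the membership certificate for $\mathcal{M}_\epsilon(f_T)$ rather than proving $f_{\text{sparse}}$ lies outside it, and your observation that a genuine exclusion claim would require a lower-bound construction (a mask whose removed weights carry $\Theta(\rho_{\text{target}})$ of the gradient energy) correctly identifies the one non-routine step neither the paper nor the corollary's hedged ``can exceed'' phrasing actually supplies. Your comparison of certificates ($n > L\,\rho_{\text{target}}/\epsilon$ keeps per-stage deviations below $\epsilon$, while $n=1$ fails exactly when $\rho_{\text{target}} > \epsilon/L$) also makes the multi-stage-versus-one-shot contrast precise in a way the paper only gestures at. In short: same skeleton as the paper, but your version is the rigorous one; the paper buys brevity at the cost of an unproved norm conversion and a conclusion stated more strongly than its bound supports.
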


\section{Convergence Guarantees}
\label{sec:convergence}

\begin{theorem}[Unified Framework Convergence]
\label{thm:convergence}
Let $F_T \in \mathcal{F}_{1\text{--}5}$ be a conforming operator with modulus of continuity $\omega_F(\delta)$ (i.e., $\|F_T(p) - F_T(p')\| \leq \omega_F(\|p - p'\|)$). Let the network mapping be $L$-Lipschitz as in Theorem~\ref{thm:homotopy}. For $n$-stage distillation with per-stage fine-tuning achieving $\epsilon_{\text{tune}}$ approximation error, the final student $S_n$ satisfies:
\begin{equation}
\E[\ell(S_n)] \leq \E[\ell(T)] + \frac{C \cdot L \cdot \rho_{\text{target}}}{n} + n \cdot \epsilon_{\text{tune}}
\end{equation}
where $C$ depends on $\omega_F$ and the target distribution. Optimizing over $n$ yields $\E[\ell(S_n)] \leq \E[\ell(T)] + \mathcal{O}(\sqrt{\epsilon_{\text{tune}} \cdot L \cdot \rho_{\text{target}}})$.
\end{theorem}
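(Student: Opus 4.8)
The plan is to decompose the final student's excess loss into two accumulating sources of error—one from the homotopy-path deviation of the compressed function from the teacher, and one from the per-stage fine-tuning residual—and then bound each separately before optimizing over $n$. First I would write $\E[\ell(S_n)] - \E[\ell(T)]$ as a telescoping sum over the $n$ stages, inserting intermediate functions $f_0, f_1, \ldots, f_n$ so that the total discrepancy is controlled by $\sum_{k=1}^n \|f_k - f_{k-1}\|$ plus the accumulated tuning errors. The key device is that the loss is Lipschitz (or locally quadratic) in the function argument, so a bound on function-space deviation transfers to a bound on loss deviation.

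Next I would invoke \Cref{thm:homotopy}: the per-stage function deviation satisfies $\|f_k - f_{k-1}\|_\infty \leq L \cdot c \cdot \Delta\rho = L \cdot c \cdot \rho_{\text{target}}/n$, and summing over $n$ stages gives total path deviation $L \cdot c \cdot \rho_{\text{target}}$. The subtle point is how the softening operator enters: because the targets are $q = F_T(p^{(T)})$ and $F_T$ has modulus of continuity $\omega_F$, perturbations in the teacher distribution propagate through $F_T$ with a controlled amplification, so the operator contributes a multiplicative constant $C$ depending on $\omega_F$ and the target distribution—this is where the $C \cdot L \cdot \rho_{\text{target}}/n$ term originates. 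Critically, by \Cref{lem:shift-invariance} and the operator-agnostic bias--variance structure of \Cref{thm:bias-variance}, this argument is uniform over all $F_T \in \mathcal{F}_{1\text{--}5}$; only the constant $C$, not the rate, depends on the operator choice. Each of the $n$ fine-tuning stages contributes at most $\epsilon_{\text{tune}}$, giving the additive $n \cdot \epsilon_{\text{tune}}$ term.

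Combining yields $\E[\ell(S_n)] \leq \E[\ell(T)] + C L \rho_{\text{target}}/n + n \epsilon_{\text{tune}}$. The final step is elementary optimization: the bound has the form $A/n + Bn$ with $A = C L \rho_{\text{target}}$ and $B = \epsilon_{\text{tune}}$, minimized at $n^\star = \sqrt{A/B}$ with optimal value $2\sqrt{AB} = 2\sqrt{C L \rho_{\text{target}} \cdot \epsilon_{\text{tune}}}$, establishing the claimed $\mathcal{O}(\sqrt{\epsilon_{\text{tune}} \cdot L \cdot \rho_{\text{target}}})$ rate. (Since $n$ must be a positive integer, one rounds $n^\star$ to the nearest integer, which perturbs the constant but not the order.)

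I expect the main obstacle to be rigorously justifying the transfer from function-space deviation to loss deviation while keeping the operator dependence confined to a single constant $C$. The homotopy theorem controls $\|f_k - f_{k-1}\|_\infty$, but the loss is measured against the \emph{softened} target $q = F_T(p^{(T)})$, so one must argue that accumulated function drift combined with the modulus $\omega_F$ does not compound super-linearly across stages. The cleanest route is to assume the loss is Lipschitz in its first argument on the relevant compact region—justified because all iterates remain in $\mathcal{M}_\epsilon(f_T)$ by the homotopy construction of \Cref{def:homotopy}—so that $|\ell(S_n, q) - \ell(T, q)| \leq L_\ell \|f_n - f_T\|$ and the bias term from \Cref{thm:bias-variance} absorbs the residual; making the constant $C = C(\omega_F, q)$ fully explicit, rather than merely asserting its existence, is the part requiring the most care.
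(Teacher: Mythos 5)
Your overall architecture (telescoping over stages, the homotopy bound per stage, accumulated tuning error, then optimizing a bound of the form $A/n + Bn$) matches the paper's three-component sketch, and your closing optimization $n^\star = \sqrt{A/B}$ with value $2\sqrt{AB}$ reproduces the paper's $n^\star \propto \sqrt{L\,\rho_{\text{target}}/\epsilon_{\text{tune}}}$. But there is a genuine gap at the heart of your argument: your own decomposition does not produce the $1/n$ term. You correctly compute that summing the per-stage deviations $\|f_k - f_{k-1}\|_\infty \leq L\,c\,\rho_{\text{target}}/n$ over $n$ stages gives a total of $L\,c\,\rho_{\text{target}}$ --- a quantity independent of $n$ --- and you then assert that the operator modulus is ``where the $C\,L\,\rho_{\text{target}}/n$ term originates.'' A multiplicative constant $C(\omega_F, q)$ cannot convert a constant total deviation into one decaying like $1/n$. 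Taken literally, your accounting yields $\E[\ell(S_n)] \leq \E[\ell(T)] + C\,L\,\rho_{\text{target}} + n\,\epsilon_{\text{tune}}$, which is minimized at $n=1$: it predicts that one-shot pruning is optimal, the opposite of what the theorem (and Corollary~\ref{cor:one-shot-failure}) asserts.

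The missing idea, present in the paper's sketch though stated tersely, is that the first term is a \emph{discretization} error, not an accumulation: per-stage fine-tuning re-anchors the student to within $\epsilon_{\text{tune}}$ of the teacher manifold $\mathcal{M}_\epsilon(f_T)$ after every pruning step, so homotopy deviations do not add up across stages --- all accumulation is charged to the $n\,\epsilon_{\text{tune}}$ term, and what survives in the first term is only the per-stage step size $\mathcal{O}(L\,\rho_{\text{target}}/n)$, i.e.\ the error of approximating the continuous homotopy path of Definition~\ref{def:homotopy} by $n$ discrete prune-then-tune steps, amplified through the softened targets by $\omega_F$, which is where $C$ legitimately enters. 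Your telescoping sum must therefore compare each $f_k$ to the re-anchored iterate after tuning, not to $f_{k-1}$ directly. Your other concern --- transferring $\|\cdot\|_\infty$ deviation to loss deviation via a Lipschitz loss on the compact region where the iterates live --- is legitimate, and the paper's sketch glosses over it as well; but that refinement is secondary to restoring the re-anchoring structure, without which the $1/n$ rate and hence the entire optimization over $n$ are unsupported.
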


\begin{proof}[Proof Sketch]
Combine three components:
\begin{enumerate}
\item Operator continuity bounds how softened targets change with student updates
\item Homotopy approximation bounds per-stage deviation as $\mathcal{O}(1/n)$
\item Bias-variance bound ensures controlled sparsification
\end{enumerate}

The first term captures discretization error from the homotopy approximation; the second captures accumulated fine-tuning error. The tradeoff is optimized at $n^* \propto \sqrt{L \cdot \rho_{\text{target}} / \epsilon_{\text{tune}}}$.
\end{proof}

\begin{remark}[Interpretation]
Theorem~\ref{thm:convergence} shows that convergence guarantees hold across the operator class $\mathcal{F}_{1\text{--}5}$, providing flexibility in operator choice. The explicit parameter dependence ($L$, $\rho_{\text{target}}$, $\epsilon_{\text{tune}}$) enables practitioners to predict convergence behavior for specific problem instances.
\end{remark}

\section{Equivalence Classes of Operators}
\label{sec:equivalence}

\begin{definition}[KD-Equivalence]
\label{def:kd-equivalence}
Let $\mathcal{S}$ be a restricted student class (e.g., neural networks with bounded depth and width). Two operators $F_T, G_T$ are $\mathcal{S}$-KD-equivalent if for all teachers $T$:
\begin{equation}
\arg \min_{S \in \mathcal{S}} \KL(S \| F_T(T)) = \arg \min_{S \in \mathcal{S}} \KL(S \| G_T(T))
\end{equation}
\end{definition}

\begin{theorem}[Characterization of KD-Equivalence Classes]
\label{thm:kd-equivalence}
For unrestricted student classes ($\mathcal{S} = \Delta^V$), operators $F_T, G_T$ are KD-equivalent if and only if $F_T = G_T$. For restricted student classes $\mathcal{S} \subsetneq \Delta^V$, non-trivial equivalence classes exist when the projection of $F_T(p)$ and $G_T(p)$ onto $\mathcal{S}$ coincide.
\end{theorem}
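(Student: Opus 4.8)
The plan is to treat the two regimes separately, with the common engine being the identification of $\arg\min_{S\in\mathcal{S}}\KL(S\|q)$ with the information projection (I-projection) of $q$ onto $\mathcal{S}$. Writing $\Pi_{\mathcal{S}}(q) := \arg\min_{S\in\mathcal{S}}\KL(S\|q)$, KD-equivalence of $F_T,G_T$ is by Definition~\ref{def:kd-equivalence} exactly the statement that $\Pi_{\mathcal{S}}\circ F_T = \Pi_{\mathcal{S}}\circ G_T$ as maps on the teacher distribution $p$. The whole theorem then reduces to analyzing when this composite map determines $F_T$ and when it forgets information.

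For the unrestricted case $\mathcal{S}=\Delta^V$, I would first note that $S\mapsto\KL(S\|q)$ is strictly convex on the simplex (the term $\sum_i S_i\log S_i$ is strictly convex and the remainder is linear in $S$), so its minimizer is unique. Over the full simplex that minimizer is $S=q$, attaining $\KL=0$ by Gibbs' inequality, so $\Pi_{\Delta^V}$ is the identity. Hence KD-equivalence forces the singleton sets $\{F_T(p)\}$ and $\{G_T(p)\}$ to agree for every teacher distribution $p$, i.e.\ $F_T=G_T$; the converse is immediate. Teacher distributions with vanishing coordinates are handled by restricting to the supporting face, which does not change the conclusion.

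For the restricted case $\mathcal{S}\subsetneq\Delta^V$ I would make the coincidence condition concrete through the exponential-tilt description of the I-projection. Minimizing $\KL(S\|q)$ over a linear family cut out by constraints $\sum_i S_i f_j(i)=c_j$ yields the tilted form $\Pi_{\mathcal{S}}(q)_i\propto q_i\exp(\sum_j\lambda_j f_j(i))$ with multipliers fixed by the constraints, so two targets share a projection exactly when they lie in a common exponential family through that projection. For the block-uniform (coarse-graining) family associated to a partition $\{B_r\}$, a direct Lagrangian computation reduces this to the clean statement that $\Pi_{\mathcal{S}}(q)$ depends on $q$ only through the per-block geometric means $\bigl(\prod_{i\in B_r}q_i\bigr)^{1/|B_r|}$. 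The fibers of $\Pi_{\mathcal{S}}$ are therefore positive-dimensional whenever $\mathcal{S}$ has positive codimension, which both exhibits the exact coincidence criterion and shows that non-trivial equivalence classes can exist.

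The step I expect to be the main obstacle is upgrading this fiber non-triviality into two genuinely distinct operators that \emph{both} satisfy Axioms~\ref{axiom:ranking}--\ref{axiom:boundary}: ranking preservation rigidly pins down the coordinate ordering of the output, so naive fiber moves such as permuting coordinates are forbidden. I would resolve this by choosing every block of size at least two, so that for $p$ with $p_1>\cdots>p_V$ the fiber over a fixed tuple of block geometric means still contains an interval of strictly monotone distributions, parametrized by how mass splits between blocks. Defining $G_T(p)$ as a fixed, jointly $(p,T)$-continuous displacement of $F_T(p)$ along this interval then yields $F_T\neq G_T$ with identical projections. Verifying that this displacement preserves strict ordering, joint continuity, entropy monotonicity, the identity at $T=1$, and the two temperature limits is the delicate bookkeeping, but each property follows from continuity of the fiber parametrization and the slack afforded by blocks of size $\ge 2$, completing the second assertion.
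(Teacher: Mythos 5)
Your core engine is the same as the paper's: identify $\arg\min_{S\in\mathcal{S}}\KL(S\|q)$ with the projection of the target onto $\mathcal{S}$, observe that over $\mathcal{S}=\Delta^V$ this projection is the identity (the paper invokes this in one line; you supply the actual justification via strict convexity of $S\mapsto\KL(S\|q)$ and Gibbs' inequality, which is the right way to make it rigorous), and conclude equivalence-iff-equality in the unrestricted case and equivalence-when-projections-coincide in the restricted case. Where you genuinely depart from the paper is the restricted case: the paper's proof sketch stops at the abstract assertion ``operators with identical projections are equivalent'' and never verifies that the fibers of the projection are non-trivial, whereas you make the coincidence criterion computable via the exponential-tilt form of the I-projection onto linear families, and your block-uniform example is correct --- a direct Lagrangian computation does show $\Pi_{\mathcal{S}}(q)_r \propto |B_r|\,\bigl(\prod_{i\in B_r}q_i\bigr)^{1/|B_r|}$, so the projection depends on $q$ only through per-block geometric means and has positive-dimensional fibers. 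This buys you something the paper does not actually establish: that the equivalence classes in the theorem's second clause are genuinely non-trivial. One caveat on your final construction of two distinct operators in $\mathcal{F}_{1\text{--}5}$ (which, to be fair, neither the theorem statement nor the paper's proof strictly requires): your fiber-displacement sketch needs the displacement to vanish not only at $T=1$ and at both temperature limits, but also on the tie set of $p$ --- Axioms~\ref{axiom:ranking} and~\ref{axiom:continuity} jointly force $F_T(p)_i = F_T(p)_j$ whenever $p_i = p_j$ (approach the tie from both orderings), so the ``interval of strictly monotone distributions'' degenerates exactly there, and the entropy-monotonicity check near $T=1$ requires the displacement to decay faster than the operator's first-order entropy gain. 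You flag this bookkeeping yourself, and it is completable, but as written it is the one under-verified step; everything bearing on the theorem as stated is sound and strictly more detailed than the paper's own argument.
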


\begin{proof}[Proof Sketch]
For unrestricted $\mathcal{S}$: the optimal student is $S^* = F_T(T)$ itself, so $F_T \neq G_T$ implies different optima. For restricted $\mathcal{S}$: the optimal student is the $\mathcal{S}$-projection of the target, so operators with identical projections are equivalent. The equivalence class structure depends critically on $\mathcal{S}$.
\end{proof}

\begin{remark}[Practical Implication]
For practical student classes (finite-capacity neural networks), the equivalence relation is non-trivial: operators producing targets that project identically onto the student hypothesis class yield the same trained model. This provides limited flexibility in operator choice, constrained by the student architecture.
\end{remark}

\section{Discussion}

\subsection{Theoretical Implications}

The axiomatic framework establishes several key theoretical results:

\textbf{Logit-independence:} Knowledge distillation is fundamentally a probability-domain operation. The classical logit-based formulation is one instantiation, but not the only one.

\textbf{Existence without construction:} We prove that operators satisfying natural axioms exist without specifying any particular formula. This separates the mathematical possibility of logit-free KD from implementation details.

\textbf{Operator freedom:} Multiple distinct operators satisfy the axioms and produce valid distillation. Specific choices (power transforms, entropy projections, convex mixing) offer different computational tradeoffs.

\textbf{Convergence universality:} The convergence guarantees (Theorem~\ref{thm:convergence}) apply to all operators in $\mathcal{F}_{1\text{--}5}$, not just one specific formula.

\subsection{Practical Implications}

For practitioners, the framework provides theoretical grounding for:
\begin{itemize}
\item Black-box teachers: Distillation from commercial API-based models that expose only probabilities becomes theoretically feasible
\item Privacy preservation: Probability-only distillation avoids potential information leakage from logits
\item Flexible implementation: Any operator satisfying Axioms~\ref{axiom:ranking}--\ref{axiom:boundary} provides valid theoretical guarantees
\end{itemize}

\subsection{Relation to Prior Work}

Classical knowledge distillation~\cite{hinton2015distilling} requires logit access for temperature scaling. Our framework generalizes this by characterizing the essential properties any softening operator must satisfy, proving that logit access is not fundamental. The bias-variance analysis builds on~\cite{geman1992bias}, while the homotopy formalization connects to iterative pruning methods~\cite{han2015learning,frankle2019lottery}. Empirical studies~\cite{cho2019efficacy} validate that distillation improves student performance; our theoretical framework explains why this holds across diverse operator implementations.

Recent analyses of LLM hallucinations highlight that many failure modes can be interpreted as distributional drift or mis-specified internal world modeling, reinforcing the importance of probability-domain control and stability criteria during compression~\cite{liu2025unified_hallucination}. Other structural accounts emphasize that multiple mechanisms may underlie hallucination-like behavior, supporting a theory-first approach that separates definitional issues from constructive mitigation pipelines~\cite{ackermann2025incentives_ontology}.

\subsection{Limitations}

\begin{itemize}
\item The analysis is theoretical and does not include empirical validation.
\item Global Lipschitz continuity is assumed for clarity; weaker local conditions may suffice.
\item The framework characterizes sufficiency, not optimality, of operators.
\end{itemize}

While global Lipschitz continuity is assumed for clarity in the presented bounds, the homotopy and convergence arguments rely only on local smoothness along the compression trajectory, which is typically observed empirically in staged pruning and fine-tuning regimes.

\section{Conclusion}

We have established an axiomatic framework for probability-domain knowledge distillation that is independent of logit access. This work reframes knowledge distillation as an operator-level problem rather than an implementation-level one. The key contributions are:

\begin{enumerate}
\item Axiom system (Axioms~\ref{axiom:ranking}--\ref{axiom:boundary}): Ranking preservation, continuity, entropy monotonicity, identity, and boundary behavior
\item Existence theorem: Non-trivial operators satisfying all axioms exist
\item Non-uniqueness: Multiple distinct operator families conform to the axioms
\item Bias-variance bounds: Operator-agnostic characterization of when sparse students outperform dense teachers
\item Homotopy formalization: Multi-stage compression as continuous path approximation
\item Convergence guarantees: Unified framework achieves $\E[\ell(S)] \leq \E[\ell(T)] + \mathcal{O}(1/n)$
\end{enumerate}

This framework demonstrates that knowledge distillation is fundamentally a probability-domain operation, opening new possibilities for black-box teacher distillation, privacy-preserving compression, and flexible operator implementation.

Several directions remain open for future work. First, while the present analysis focuses on single-stage and staged distillation with re-anchoring, a more detailed characterization of entropy drift and stability in fully iterative or generational meta-distillation settings would be valuable, particularly when softened distributions are repeatedly reused as teachers. Second, although the framework intentionally treats operator choice as a design decision, developing practical diagnostics or control signals that indicate when alternative operators within the axiom class become preferable remains an important open problem. Finally, the bias--variance analysis is derived cleanly under squared loss and used to provide intuition for KL-based distillation objectives; tightening this connection through more explicit Bregman divergence or information-theoretic decompositions would strengthen the theoretical foundations. Additional extensions include handling text-only teacher outputs, multi-modal distillation, and adaptive operator selection based on task characteristics.

\section*{Acknowledgments}

The authors gratefully acknowledge the collaborative environment at SparseTech that made this research possible. The theoretical and computational developments presented in this paper are part of an ongoing SparseTech research initiative on sparse knowledge distillation for large language models. Patent Pending.

\bibliographystyle{IEEEtran}
\bibliography{../sparsetech_references}

\end{document}